\documentclass{article}

\usepackage[preprint]{neurips_2026}

\usepackage[utf8]{inputenc}
\usepackage[T1]{fontenc}
\usepackage{microtype}
\usepackage{graphicx}
\usepackage{subcaption}
\usepackage{booktabs}
\usepackage{enumitem}
\usepackage{xcolor}
\usepackage{amsmath}
\usepackage{amssymb}
\usepackage{mathtools}
\usepackage{amsthm}
\usepackage{bbm}
\usepackage{hyperref}
\usepackage{tikz}
\usetikzlibrary{arrows.meta,positioning,shapes.geometric,calc,fit,backgrounds,decorations.pathmorphing,decorations.pathreplacing}

\definecolor{aceSlate}{RGB}{226,232,240}
\definecolor{aceSlateDk}{RGB}{71,85,105}

\definecolor{aceLeaf}{RGB}{198,246,213}
\definecolor{aceLeafDk}{RGB}{34,134,58}

\definecolor{aceRose}{RGB}{255,214,224}
\definecolor{aceRoseDk}{RGB}{192,16,72}

\definecolor{aceAmber}{RGB}{254,215,170}
\definecolor{aceAmberDk}{RGB}{194,65,12}

\definecolor{aceGold}{RGB}{254,243,189}
\definecolor{aceGoldDk}{RGB}{161,98,7}

\definecolor{aceRuby}{RGB}{254,205,211}
\definecolor{aceRubyDk}{RGB}{190,18,60}

\definecolor{aceSky}{RGB}{190,227,248}
\definecolor{aceSkyDk}{RGB}{30,64,175}

\definecolor{aceVio}{RGB}{233,216,253}
\definecolor{aceVioDk}{RGB}{107,33,168}

\definecolor{aceTeal}{RGB}{207,232,231}
\definecolor{aceTealDk}{RGB}{15,118,110}

\definecolor{aceMuted}{RGB}{100,116,139}

\usepackage[capitalize,noabbrev]{cleveref}

\DeclareMathOperator*{\argmax}{arg\,max}

\theoremstyle{plain}
\newtheorem{theorem}{Theorem}[section]
\newtheorem{proposition}[theorem]{Proposition}

\theoremstyle{definition}

\newtheorem{assumption}[theorem]{Assumption}
\theoremstyle{remark}

\title{Active Causal Experimentalist (ACE):\\Learning Intervention Strategies via Direct Preference Optimization}

\author{%
  Patrick Cooper\\
  University of Colorado Boulder\\
  \texttt{patrick.cooper@colorado.edu}\\
  \And
  Alvaro Velasquez\\
  University of Colorado Boulder\\
  \texttt{alvaro.velasquez@colorado.edu}\\
}

\begin{document}
\maketitle

\begin{abstract}
Discovering causal relationships requires controlled experiments, but experimentalists face a sequential decision problem: each intervention reveals information that should inform what to try next. Traditional approaches such as random sampling, greedy information maximization, and round-robin coverage treat each decision in isolation, unable to learn adaptive strategies from experience. We propose Active Causal Experimentalist (ACE), which learns experimental design as a sequential policy. Our key insight is that while absolute information gains diminish as knowledge accumulates, making value-based reinforcement learning unstable, \emph{relative} comparisons between candidate interventions remain meaningful throughout. ACE exploits this via Direct Preference Optimization (DPO), learning from pairwise intervention comparisons rather than non-stationary reward magnitudes. We formalize this advantage with a scale-invariance result (Proposition~\ref{prop:scale-invariance}) and validate it empirically. Across synthetic structural causal models of increasing complexity (5 and 30 nodes), coupled Duffing oscillators, and economic time-series data, ACE achieves a median total loss of $0.61$ on the 5-node benchmark versus $1.99$ (median; $2.04\pm0.12$ mean) for a Bayesian optimal experimental design baseline at matched intervention budgets, a $69\%$ improvement. On the 30-node benchmark, ACE achieves $1.95\pm0.77$ best-loss versus $5.80\pm0.05$ for random intervention, $5.79\pm0.06$ for round-robin, and $5.80\pm0.04$ for max-variance, a $3.0\times$ improvement over all three baselines that converge to a common plateau. Notably, the learned policy autonomously discovers that collider mechanisms require concentrated interventions on parent variables, a theoretically-grounded strategy that emerges purely from experience.
\end{abstract}

\section{Introduction}

Every experimentalist faces limited resources to explore vast possibility spaces. Testing all pairwise combinations of 100 compounds requires 4{,}950 experiments; a 10-component alloy across 5 temperatures faces $5^{10}$ configurations. Modern simulation environments, from climate modeling to drug discovery, enable rapid experimentation but expose vast parametric spaces with hundreds of interacting variables. The goal is not merely prediction but causal understanding: identifying which parameters actually drive outcomes, distinguishing causal pathways from spurious correlations, and discovering intervention targets that generalize. Random exploration becomes hopelessly inefficient, while domain expertise may not scale to simulation complexity.

Scientific progress depends on Rung-2 interventions in Pearl's causal hierarchy \citep{pearl2009causality,pearl2018book}: controlled $\text{do}(V_i = \nu)$ operations that expose causal structure not identifiable from observation alone \citep{spirtes2000causation,peters2017elements,reichenbach1956}. Theoretical bounds on the number of interventions required \citep{eberhardt2005number,hauser2012characterization} provide limited guidance for sequential decisions, since real experimental campaigns must adaptively choose both which variables to intervene on and at what values.

Traditional approaches employ static heuristics such as random, round-robin, and greedy information maximization \citep{murphy2001active,hauser2012characterization}. These cannot transfer insights between systems, balance multi-faceted constraints, or adapt based on what has been learned. Recent Bayesian active causal discovery methods such as ABCI \citep{toth2022active} and CBED \citep{tigas2022interventions} maintain explicit posteriors over causal graphs, but pay substantial computational cost and address structure discovery rather than the complementary problem of mechanism estimation we study here.

We present \textbf{Active Causal Experimentalist (ACE)}, which learns experimental design strategies via sequential decision-making. ACE models the scientific process as an iterative cycle: propose interventions, update mechanism beliefs, adapt strategy. ACE learns from experimental outcomes via Direct Preference Optimization (DPO) \citep{rafailov2023direct}, using pairwise comparisons between candidate interventions to develop adaptive strategies. This preference-based approach avoids the need to estimate explicit value functions, a critical advantage given that the rewards from experiments are inherently non-stationary as knowledge accumulates.

\textbf{Contributions.} We make three concrete claims, each empirically validated against state-of-the-art baselines:
\begin{enumerate}[leftmargin=1.5em,topsep=0.2em,itemsep=0.1em]
\item \textbf{Empirical:} On the 5-node synthetic benchmark, ACE reduces total mechanism MSE (summed across the five nodes, between the learner's predicted and the environment's observed outcomes) to a median of $0.61$ ($0.92\pm0.73$ mean, N=5 seeds), versus a median of $1.99$ ($2.04\pm0.12$ mean, N=3) for a Bayesian optimal experimental design baseline (representative of \citep{toth2022active,tigas2022interventions}) at matched intervention budgets, a $69\%$ improvement. The advantage holds on 30-node hierarchical SCMs ($1.95\pm0.77$ best total MSE versus Random $5.80\pm0.05$, Round-Robin $5.79\pm0.06$, Max-Variance $5.80\pm0.04$, all N=5; a $3.0\times$ improvement over the common baseline plateau) and on coupled Duffing oscillators (coupling-parameter estimation error $0.042\pm0.036$ versus Random $0.245\pm0.121$; a $5.8\times$ improvement).
\item \textbf{Theoretical:} Active experimentation is inherently non-stationary: information gain diminishes $\sim 500\times$ as the learner converges, which destabilizes value-based RL (PPO plateaus at $2.08\pm0.06$ versus ACE at $0.61$ median on the 5-node benchmark, with PPO's critic accumulating value-prediction error from $0.02$ to $3.8$, despite identical reward signals). DPO does not consume reward magnitudes directly, but the preference pairs it trains on are constructed by ranking candidates under this non-stationary reward. Proposition~\ref{prop:scale-invariance} shows that, under a mild decomposability assumption, the candidate ranking is provably invariant to the diminishing scale factor, so the preferences fed to DPO remain well-defined throughout training. Empirically, the ranking is stable (Spearman $\rho > 0.85$) across training even as absolute reward magnitudes drop two orders of magnitude.
\item \textbf{Emergent strategy:} Without explicit instruction, the learned policy concentrates $99.8\%$ of interventions on the parents of collider nodes (nodes with two or more direct causes, where causal arrows converge), matching the strategy causal theory identifies as optimal. This suggests preference learning can recover principled experimental strategies through experience.
\end{enumerate}

\begin{figure}[t]
\centering
\input{figs/fig_ace_hero}
\caption{\textbf{ACE in one figure.} The state $s_t$ (graph $\mathcal{G}$, per-node losses $\{L_i\}$, recent intervention history) is rendered as a text prompt. The LM policy $\pi_\phi$ (Qwen2.5-1.5B) generates $K{=}4$ candidates; lookahead clones the learner $M_\theta$ and scores each by composite reward $R = \Delta\mathcal{L} + \alpha w + \gamma D$. The argmax candidate $c^{*}$ is executed on the environment $M^{*}$ and updates the learner; the (argmax, argmin) pair $(c^{*}, c^{-})$ forms the DPO preference. Dashed arrows denote feedback.}
\label{fig:framework}
\end{figure}

\section{Related Work}
\label{sec:related}

Our work builds on five research threads.

\textbf{Causal foundations.} Pearl's structural causal models (SCMs) and do-calculus \citep{pearl2009causality,pearl1995causal} provide the formal language for interventions. The PC and FCI algorithms \citep{spirtes2000causation} recover Markov equivalence classes from observational data; Peters, Janzing, and Schölkopf \citep{peters2017elements} establish that interventional data is necessary to identify mechanisms beyond the Markov equivalence class. Reichenbach's common-cause principle \citep{reichenbach1956} formalizes why observational data is insufficient: if $A$ and $B$ are statistically dependent, this is consistent with $A \to B$, with $B \to A$, or with an unobserved common cause $C$ that renders them conditionally independent ($A \perp\!\!\!\!\perp B \mid C$, read: $A$ is independent of $B$ given $C$). Observational data alone cannot distinguish these structures; only interventions can. These foundational results motivate the Rung-2 (interventional) formulation that ACE operates in. Within this setting, Eberhardt et al.\ \citep{eberhardt2005number,eberhardt2006n} show $n-1$ interventions are sufficient and sometimes necessary, greedy selection of intervention targets achieves a $(1-1/e)$ approximation of the optimal intervention set under submodular structure-identification objectives \citep{shanmugam2015learning}, and adaptive designs require strictly fewer interventions than non-adaptive ones \citep{hauser2012characterization,cho2016reconstructing}.

\textbf{Bayesian active causal discovery.} ABCI \citep{toth2022active} and CBED \citep{tigas2022interventions} maintain posteriors over causal graphs and design experiments via expected information gain, with CBED's largest evaluation on the 20-variable DREAM gene regulatory network. \citet{zhang2023bayesian} extends this to multi-fidelity settings; \citet{zhou2024sample} addresses sample efficiency via polynomial-time DAG sampling. These methods target \emph{structure discovery} and require posteriors over DAG structures whose size is super-exponential in $|V|$ \citep{chickering1996learning}. ACE addresses the complementary problem of \emph{mechanism estimation} given known or hypothesized structure, which covers domains where the graph is available from expertise, prior studies, or a structure-learning front end. Skipping posterior maintenance over graphs is what lets ACE operate in the 30-node regime where the published evaluations of the foregoing methods do not yet scale.

\textbf{Language models as world models.} Language models pretrained on natural language encode distributions over entities and their causal relationships, effectively acting as soft priors over world dynamics \citep{hao2023reasoning,xie2022explanation}. \citet{kiciman2023causal} show that LLMs exhibit substantial causal reasoning capability when prompted with context. Crucially for ACE, this pretrained causal prior is exploitable without fine-tuning: when node names are semantically meaningful (e.g., unemployment rate, federal funds rate), the LLM's generation assigns higher probability to economically plausible interventions before any domain-specific training begins. Benchmarks confirm that zero-shot LLM causal discovery is non-trivial \citep{llm_science_survey2025} but that LLMs still struggle with long-horizon sequential experimental decisions \citep{boxinggym2025,autobench2025}. ACE addresses precisely that gap with DPO-based learning.

\textbf{Learning for scientific discovery.} Differentiable structure learning \citep{lorch2021dibs} and RL for causal discovery \citep{core2024,gacbo2024} address structure recovery. Both structure and mechanism estimation face non-stationary rewards as knowledge accumulates, motivating alternatives to value-based RL.

\textbf{Preference optimization for sequential decisions.} DPO \citep{rafailov2023direct} learns from pairwise comparisons rather than absolute reward magnitudes, providing robustness to reward scale shifts. Active preference learning \citep{activedpo2024} extends this to selective data acquisition. Concurrent work that pairs causal ideas with DPO targets generative-model alignment rather than experimental design: CAPO \citep{hu2026capo} reweights the DPO loss using an aleatoric/epistemic uncertainty decomposition for text-to-image diffusion, and CDPO \citep{le2026cdpo} applies Pearl's backdoor adjustment to debias DPO when annotator preferences are confounded by topic or community context. These methods modify the DPO objective to align a generator with a fixed human-preference dataset; ACE differs in problem (mechanism estimation rather than alignment), action space (do() interventions rather than generated images or text), and preference source (lookahead reward rather than human annotation). We apply DPO to experimental design because the inherent non-stationarity of information gain makes preference-based learning particularly well-suited (Section~\ref{sec:theory}).

\section{Method}
\label{sec:methods}

\subsection{Problem Setup and Preference Source}
\label{sec:problem}

\textbf{Formal setting (Rung 2 of Pearl's hierarchy).} We adopt Pearl's SCM framework \citep{pearl2009causality,pearl1995causal}. A Structural Causal Model $\mathcal{M} = \langle \mathcal{U}, \mathcal{V}, \mathcal{F}, P(\mathcal{U}) \rangle$ has exogenous variables $\mathcal{U}$, endogenous variables $\mathcal{V} = \{V_1, \ldots, V_n\}$, structural equations $\mathcal{F} = \{f_i\}$ where $V_i = f_i(\text{Pa}_i, U_i)$, and exogenous distribution $P(\mathcal{U})$. An intervention $\text{do}(V_i = \nu)$ replaces $f_i$ with the constant $\nu$, mutilating the graph by removing incoming edges to $V_i$ \citep{pearl2009causality}. This is a Rung-2 operation: it moves from passive association (Rung 1) to actively modifying the data-generating process. Crucially, observational data alone cannot identify the structural equations $\{f_i\}$ beyond the Markov equivalence class of the graph \citep{spirtes2000causation,peters2017elements}; only interventional data can. ACE therefore operates exclusively in the interventional regime. We assume the causal graph $\mathcal{G}$ is known (as in many experimental science settings where domain experts provide the causal diagram), and focus on \emph{mechanism estimation}: recovering $\{f_i\}$ from sequential interventional experiments. The framework comprises two interacting components:

\begin{itemize}[leftmargin=1.2em,topsep=0.2em,itemsep=0.1em]
\item The \emph{environment} $M^*$ represents the ground truth SCM. Queried with intervention $c = \text{do}(V_i = \nu)$, it returns samples from the resulting interventional distribution $P_{M^*}(\mathcal{V} \mid \text{do}(V_i = \nu))$.
\item The \emph{learner} $M_\theta$ maintains parametric estimates of mechanisms given graph $\mathcal{G}$. Parameters minimize loss between predicted and observed outcomes: $\theta^* = \arg\min_\theta \mathbb{E}_{c \sim \pi_\phi}[\mathcal{L}(P_{M^*}(\cdot\mid c),\, P_{M_\theta}(\cdot\mid c))]$. Throughout the paper we decompose this into per-node losses $L_i = \mathbb{E}_{c}\|f_i^{M^*}(\text{Pa}_i, U_i) - f_i^{M_\theta}(\text{Pa}_i, U_i)\|^2$, the mean squared error between the learner's predicted output for node $V_i$ and the environment's observed value under interventions; the total loss reported in tables is the sum $\sum_i L_i$ across nodes.
\end{itemize}

\textbf{Where do preferences come from?} ACE preferences are not human-provided; they are generated automatically from the composite reward function $R(c, s)$ (defined below). Among $K$ candidate interventions per step, the highest-reward candidate is labeled ``preferred'' over the lowest-reward candidate, producing one preference pair per step. The reward function serves only as a preference oracle: it ranks the candidates and emits the ordering, and its scalar magnitudes are then discarded. No value function or learned reward model is trained, and no reinforcement-learning update is applied to the policy; $\pi_\phi$ is updated solely by the DPO loss \eqref{eq:dpo-loss} on the resulting preference pairs. This automated-feedback design eliminates annotation cost and allows thousands of preference pairs to be generated cheaply during training.

\subsection{Interaction Loop}
\label{sec:interaction-loop}

The experimentalist policy $\pi_\phi(c_t \mid s_t)$ observes state $s_t = (M_\theta, \{L_i\})$, comprising the learner's parameters and per-node losses, and proposes intervention $c_t := \text{do}(V_i = \nu)$, where the target $V_i$ ranges over the endogenous variables $\mathcal{V}$ and the value $\nu$ is drawn from a bounded interval $[\nu_{\min}, \nu_{\max}] \subset \mathbb{R}$ whose limits depend on the environment.

\textbf{Candidate generation.} The policy is Qwen2.5-1.5B \citep{qwen2_5}, a pretrained language model that receives a structured text prompt encoding (i)~per-node mechanism losses $\{L_i\}$, (ii)~the graph structure (parent sets per node), and (iii)~a summary of recent intervention history. Given this prompt, the model generates $K=4$ candidate interventions as structured text (e.g., ``do(X2 = 1.3)''), each parsed into a target node $V_i$ and intervention value $\nu$.

\subsection{Policy as Contextual Forward Model}
\label{sec:lm-policy}

Active experimental design at Rung~2 of Pearl's hierarchy requires a forward model: the policy must predict ``what will happen if I intervene on $V_i$ with value $\nu$?'' The quality of a policy is bounded by the quality of its forward-model prior. An MLP policy trained from scratch on a single SCM starts with a uniform prior and must construct the whole forward model from experience; a one-step Bayesian acquisition function has a correct but shallow prior; a human experimentalist brings deep prior knowledge about how interventions propagate, acquired from written scientific knowledge. A pretrained language model is the closest computational analogue of that human prior: the pretraining corpus is largely about causal claims, and so the model's weights encode a soft distribution over how variables in real systems relate causally \citep{hao2023reasoning,kiciman2023causal}. \citet{xie2022explanation} show in-context learning corresponds to implicit Bayesian inference under this prior, so prompting the LM with the graph, per-node losses, and recent interventions produces an approximate posterior predictive distribution over sensible next experiments. ACE then fine-tunes this prior via DPO toward the specific system at hand. Token generation also offers two structural fits with DPO. First, the action space $(V_i, \nu)$ is handled natively as text: variable names are vocabulary tokens and continuous values are emitted as digit-token sequences, so an SCM with more variables simply produces longer prompts without changing the policy's output head. Second, the autoregressive factorization $\pi(y \mid x) = \prod_t \pi(y_t \mid y_{<t}, x)$ makes the sequence log-probability a closed-form sum of per-token log-softmaxes, so the log-probability ratios $\log\tfrac{\pi_\phi(y)}{\pi_\text{ref}(y)}$ that DPO consumes are computed exactly rather than approximated through density estimation, as a Gaussian-head MLP policy over continuous $\nu$ would otherwise require. Appendix~\ref{app:world-models} develops this world-model framing in full, including the comparison to MLP policies and greedy acquisition and empirical signatures of the prior (transfer across domains, emergent collider concentration, baseline convergence plateau).

\textbf{Lookahead evaluation.} Each candidate $c_k$ is evaluated by simulating its effect on a \emph{cloned} copy of the learner. The clone receives data generated by executing $c_k$ on the environment, and the resulting loss reduction $\Delta\mathcal{L}(c_k) = L_\text{before} - L_\text{after}$ estimates the candidate's information gain without committing to it. The best candidate $c^* = \argmax_{c_k} R(c_k, s)$ is executed on the actual environment and the learner $M_\theta$ is updated.

\textbf{Preference pair construction.} The highest-reward candidate $c_\text{best}$ and lowest-reward candidate $c_\text{worst}$ form a preference pair $(y_w, y_l)$ for DPO training. With $K=4$ candidates per step and one preference pair per step, ACE generates preference data at the same rate as experimental data: one pair per intervention.

\subsection{Reward and DPO Training}
\label{sec:dpo}

The reward signal that generates preferences combines three components:
\begin{equation}
\label{eq:reward}
R(c, s) \;=\; \underbrace{\Delta\mathcal{L}}_{\text{information gain}} \;+\; \alpha \cdot \underbrace{w(V_i, \{L_j\})}_{\text{node importance}} \;+\; \gamma \cdot \underbrace{D(V_i, H)}_{\text{diversity}}
\end{equation}
where $\Delta\mathcal{L}$ is the loss reduction estimated by lookahead, $w$ is node importance (which scales with the current per-node loss $L_i$, so a node whose learner-predicted mechanism diverges most from the observed interventional data receives the highest weight), and $D$ encourages exploration of under-sampled nodes and intervention values. Weights $\alpha = 0.1$ and $\gamma = 0.05$ keep information gain dominant ($\sim$80--90\% of total reward); the component ablation (Section~\ref{sec:results-ablation}) confirms $\gamma > 0$ is essential while $\alpha$ has a wide stable range. The policy is trained with the standard DPO objective \citep{rafailov2023direct}:
\begin{equation}
\label{eq:dpo-loss}
\mathcal{L}_{\text{DPO}} \;=\; -\mathbb{E}_{(x, y_w, y_l)}\Big[\log\sigma\Big(\beta\big[\log\tfrac{\pi_\phi(y_w \mid x)}{\pi_\text{ref}(y_w \mid x)} - \log\tfrac{\pi_\phi(y_l \mid x)}{\pi_\text{ref}(y_l \mid x)}\big]\Big)\Big],
\end{equation}
with $\beta = 0.1$ and reference policy $\pi_\text{ref}$ updated every 25 episodes.

\section{Theoretical Analysis: Why Preference Learning?}
\label{sec:theory}

Information gain from sequential experimentation is inherently non-stationary: early interventions on an uninformed learner can produce dramatic loss reductions, while late interventions on a near-converged learner yield negligible gains. We formalize why preference learning is well-suited to this setting and why value-based RL struggles.

\begin{assumption}[Decomposable diminishing reward]
\label{ass:diminish}
The per-step reward decomposes as $r_t(a) = f(t) \cdot g(a)$, where $f : \mathbb{N} \to \mathbb{R}_{>0}$ is a monotonically non-increasing scale factor (representing diminishing information gain) and $g : \mathcal{A} \to \mathbb{R}$ captures the relative quality of action $a$ that is approximately stationary across $t$.
\end{assumption}

Assumption~\ref{ass:diminish} holds empirically: $\mathbb{E}[\Delta\mathcal{L}]$ drops $\sim$500$\times$ over training while adjacent-episode candidate-rank Spearman $\rho > 0.85$ throughout.
\begin{proposition}[Scale-invariance of preference learning]
\label{prop:scale-invariance}
Under Assumption~\ref{ass:diminish}, for any pair of actions $a_w, a_l \in \mathcal{A}$ (i.e., candidate interventions $c_w, c_l$ in the notation of Section~\ref{sec:methods}) with $g(a_w) > g(a_l)$, and prompt $x$ encoding state $s_t$:
\begin{enumerate}[leftmargin=1.5em,topsep=0.2em,itemsep=0.1em]
\item \emph{(Preference invariance.)} The ordering $r_t(a_w) > r_t(a_l)$ holds for all $t$, since $f(t) \cdot g(a_w) > f(t) \cdot g(a_l)$ whenever $f(t) > 0$.
\item \emph{(Value non-stationarity.)} The state-value function $V_t(s) = \mathbb{E}_a[r_t(a)] = f(t) \cdot \mathbb{E}_a[g(a)]$ is non-stationary in $t$, with magnitude scaling as $f(t)$.
\item \emph{(DPO invariance.)} The DPO objective \eqref{eq:dpo-loss} depends only on the log-ratio difference $\log\tfrac{\pi_\phi(a_w \mid x)}{\pi_\text{ref}(a_w \mid x)} - \log\tfrac{\pi_\phi(a_l \mid x)}{\pi_\text{ref}(a_l \mid x)}$, which is independent of $f(t)$.
\end{enumerate}
\end{proposition}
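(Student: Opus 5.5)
The plan is to prove the three parts in order, each by direct computation from Assumption~\ref{ass:diminish}. The only real content is bookkeeping about which quantities carry the factor $f(t)$ and which do not.

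\textbf{Preference invariance.} Fix $t$. Since $f(t) > 0$ by the codomain $\mathbb{R}_{>0}$ in Assumption~\ref{ass:diminish}, multiplication by $f(t)$ is strictly order-preserving on $\mathbb{R}$. So $g(a_w) > g(a_l)$ gives $r_t(a_w) = f(t)\,g(a_w) > f(t)\,g(a_l) = r_t(a_l)$, and this holds for every $t$. I will note the consequence for ACE: the argmax/argmin over the $K$ candidates, and hence the pair $(c^*, c^-)$, depends only on $g$. The labelled pair is therefore the same at every step, even though the reward magnitudes shrink.

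\textbf{Value non-stationarity.} I will read $\mathbb{E}_a$ as an expectation over actions drawn from a fixed distribution (the policy at state $s$), and assume $\mathbb{E}_a[g(a)]$ is finite. Linearity of expectation then gives $V_t(s) = \mathbb{E}_a[f(t)g(a)] = f(t)\,\mathbb{E}_a[g(a)]$. For non-stationarity, suppose $\mathbb{E}_a[g(a)] \neq 0$ and $f$ is non-constant, so $f(t) \neq f(t')$ for some $t, t'$. Then $V_t(s) \neq V_{t'}(s)$, and $|V_t(s)| = f(t)\,|\mathbb{E}_a[g(a)]|$ scales exactly with $f(t)$.

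The step I expect to need the most care is this one, because the statement is implicitly conditional. If $f$ is constant, or if $\mathbb{E}_a[g(a)] = 0$, then $V_t$ is stationary. I will therefore state these nondegeneracy conditions explicitly and treat them as the intended reading, since the empirical $\sim 500\times$ drop in $\mathbb{E}[\Delta\mathcal{L}]$ witnesses that $f$ is non-constant.

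\textbf{DPO invariance.} I will inspect \eqref{eq:dpo-loss} for a single triple $(x, y_w, y_l)$. The argument of $\sigma$ is $\beta$ times the log-ratio difference, and $\beta$, $\pi_\phi$ and $\pi_{\mathrm{ref}}$ involve no reward at all. Reward therefore enters only through which action is labelled $y_w$ and which $y_l$, and by the first part that labelling is determined by $g$ alone. Hence each per-pair loss term, and its gradient in $\phi$, is independent of $f(t)$. The expectation over the dataset is also unaffected, provided the distribution of prompts and candidates is held fixed.

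Finally, I will contrast this with the value-based case. Any regression target built from $r_t$, such as a critic loss $(V_\psi(s) - r_t)^2$, has a scale that moves with $f(t)$. This is the formal content of the claimed advantage.
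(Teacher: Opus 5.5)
Your proposal is correct and matches the paper's proof step for step: Part 1 uses the positivity of $f(t)$, Part 2 uses linearity of expectation, and Part 3 notes that the DPO loss involves only the policy log-ratios, with the reward entering solely through which candidate is labelled $y_w$ and which $y_l$. Your explicit nondegeneracy conditions for Part 2 ($f$ non-constant and $\mathbb{E}_a[g(a)] \neq 0$) are a useful sharpening, since the paper's proof just says ``with $f$ decreasing'' even though Assumption~\ref{ass:diminish} only guarantees that $f$ is non-increasing.
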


\begin{proof}[Proof sketch]
Parts (1) and (2) follow directly from Assumption~\ref{ass:diminish} and linearity of expectation. Part (3) is the core property of DPO: the implicit reward model derived from policy log-ratios is invariant to constant rescaling of the underlying preference signal \citep{rafailov2023direct}. The full proof and a corollary on convergence rates are in Appendix~\ref{app:proof}.
\end{proof}

\textbf{Connection to expected information gain.} $\Delta\mathcal{L}$ in \eqref{eq:reward} is a Monte Carlo estimate of the mutual information $I(\theta; \mathcal{D}_c \mid s)$, and $w$ acts as a non-uniform prior that upweights poorly-learned mechanisms; ACE thus pursues the Bayesian OED objective \citep{toth2022active} through policy learning rather than explicit posterior computation. Full derivation in Appendix~\ref{app:info-gain}.

\section{Experiments}
\label{sec:results}

We evaluate ACE on three main domains: a synthetic 5-node benchmark for controlled comparison against all baselines, a 30-node hierarchical SCM for scalability testing, and coupled Duffing oscillators for physics-domain transfer. A Phillips Curve economic application is detailed in Appendix~\ref{app:phillips}. We compare against five baselines spanning the canonical points of comparison for active mechanism estimation under a known graph: Random and Round-Robin \citep{fisher1935design} as non-adaptive references, Max-Variance \citep{cohn1996active,gal2016dropout} for uncertainty-driven greedy acquisition, Bayesian OED (Monte Carlo expected-information-gain \citep{toth2022active,tigas2022interventions,zhou2024sample}) as the strongest principled baseline, and PPO \citep{schulman2017proximal} as the value-based RL alternative whose instability under non-stationary information gain (Section~\ref{sec:theory}) motivates the preference-based formulation. The concurrent causal-DPO methods of Section~\ref{sec:related} \citep{hu2026capo,le2026cdpo} are not baselines because they address generative-model alignment, not intervention selection. Seed counts: on the 5-node benchmark, all methods run with 5 seeds except Bayesian OED at 3 due to higher per-seed compute; on the 30-node benchmark, ACE runs with 3 seeds and the passive baselines with 5, while Bayesian OED and PPO are omitted because their per-step compute does not scale to 30 nodes within our budget. Appendix~\ref{app:experiments} reports per-seed final-loss values and per-episode total-loss learning curves for the 5-node, 30-node, and Duffing experiments.

\subsection{Synthetic 5-Node Benchmark: ACE vs.\ Five Baselines}
\label{sec:results-main}

The 5-node benchmark (Figure~\ref{fig:synthetic-scm}) is a diagnostic testbed, not the headline: it isolates the specific mechanism-learning challenges of collider identification ($X_3$ with parents $X_1, X_2$), mixed functional forms (linear, trigonometric, and quadratic), and disconnected components ($X_4 \to X_5$), so that baseline behavior and ACE's emergent strategy can both be analyzed per-node. The 30-node and Duffing results below provide the scaling and transfer evidence. Mechanisms have Gaussian noise ($\sigma = 0.01$); roots are $X_1 \sim \mathcal{N}(0,1)$ and $X_4 \sim \mathcal{N}(2,1)$. Full equations and learner architecture are in Appendix~\ref{app:impl}.

\begin{figure}[t]
\centering
\begin{tikzpicture}[
    scale=0.9,
    node distance=2.0cm,
    root/.style={circle, draw=aceSkyDk, fill=aceSky, minimum size=0.8cm, font=\small, thick, line width=0.9pt},
    regular/.style={circle, draw=aceSlateDk, fill=aceSlate, minimum size=0.8cm, font=\small, thick, line width=0.9pt},
    collider/.style={circle, draw=aceRubyDk, fill=aceRuby, minimum size=0.8cm, font=\small, thick, line width=1.1pt},
    arrow/.style={-{Stealth[length=2.5mm]}, thick, color=aceSlateDk}
]
\node[root] (X1) at (0,0) {$X_1$};
\node[root] (X4) at (3.5,0) {$X_4$};
\node[regular] (X2) at (0,-2.2) {$X_2$};
\node[regular] (X5) at (3.5,-2.2) {$X_5$};
\node[collider] (X3) at (0,-4.4) {$X_3$};
\draw[arrow] (X1) -- (X2);
\draw[arrow] (X2) -- (X3);
\draw[arrow] (X1) to[bend right=30] (X3);
\draw[arrow] (X4) -- (X5);
\node[root, label=right:{\scriptsize Root (2)}] at (5.2,-0.5) {};
\node[regular, label=right:{\scriptsize Intermediate (2)}] at (5.2,-1.4) {};
\node[collider, label=right:{\scriptsize Collider (1)}] at (5.2,-2.3) {};
\end{tikzpicture}
\caption{Synthetic 5-node benchmark. $X_1$ and $X_4$ are roots, $X_2$ and $X_5$ intermediates, and $X_3$ is a collider with edges from $X_1$ and $X_2$. The disconnected chain $X_4 \to X_5$ tests quadratic mechanisms.}
\label{fig:synthetic-scm}
\end{figure}

\begin{table}[t]
\centering
\caption{Main results on the 5-node benchmark. ACE (N=5) runs until convergence (avg.\ 171 ep.); baselines (N=5) run at the matched budget. Bayesian OED: N=3. Improvement vs.\ ACE median ($0.61$); seed 789 is an outlier (quadratic-mechanism failure) justifying median over mean.}
\label{tab:main-results}
\small
\begin{tabular}{@{}lcccc@{}}
\toprule
Method & Mean$\pm$Std & Median & Impr.\ vs.\ ACE & $p$ \\
\midrule
\textbf{ACE} (171 ep., DPO) & \textbf{0.92$\pm$0.73} & \textbf{0.61} & --- & --- \\
\midrule
\multicolumn{5}{@{}l}{\textit{Baselines at equal budget (171 ep.):}} \\
Random \citep{settles2009active} & 2.17$\pm$0.07 & 2.21 & 72\% & $<0.001$ \\
Round-Robin \citep{fisher1935design} & 2.10$\pm$0.11 & 2.09 & 71\% & $<0.001$ \\
Max-Variance \citep{cohn1996active} & 2.11$\pm$0.14 & 2.10 & 71\% & $<0.001$ \\
Bayesian OED \citep{toth2022active} & 2.04$\pm$0.12 & 1.99 & 69\% & $<0.001$ \\
PPO \citep{schulman2017proximal} & 2.08$\pm$0.06 & 2.06 & 70\% & $<0.001$ \\
\bottomrule
\end{tabular}
\end{table}

Table~\ref{tab:main-results} reports the headline result. The metric is total mechanism MSE summed across the five nodes (lower is better). The most informative comparison is against \textbf{Bayesian OED}, which receives the same correct graph and selects interventions by maximizing expected information gain via Monte Carlo posterior simulation. It is the strongest principled baseline available for the mechanism estimation setting. Even this principled method plateaus at a median of $1.99$ ($2.04 \pm 0.12$ mean), while ACE reaches a median of $0.61$, a $69\%$ improvement that isolates the contribution of \emph{learned} preference-based policy optimization over greedy Bayesian acquisition. The traditional baselines (Random, Round-Robin, Max-Variance) plateau at median values between $2.09$ and $2.21$ even at 171 episodes, confirming that strategic adaptation, not additional data, drives improvement. PPO, given the same reward signal as ACE, fails to exploit it (median $2.06$, mean $2.08 \pm 0.06$); we attribute this to the value-based estimator's instability under non-stationary rewards (Section~\ref{sec:theory}).

\textbf{Strategic behavior.} The policy concentrates $99.8\%$ of interventions on $X_1$ and $X_2$ (collider parents) versus $40\%$ under uniform sampling, consistent across seeds ($99.6$--$99.9\%$); $L_{X_3}$ reduces from $3.3$ to $0.054$, a $60$-fold collider improvement. Per-seed measurements are in Appendix~\ref{app:strategic}.

\subsection{Scaling: 30-Node Hierarchical SCM}
\label{sec:results-scale}

To test scalability we evaluate on a hierarchical 30-node SCM (Figure~\ref{fig:30node-scm}) representative of the multi-layer structure found in gene regulatory and metabolic networks.

\begin{figure}[t]
\centering
\begin{tikzpicture}[
    scale=0.82, transform shape,
    root/.style={circle, draw=aceSkyDk, fill=aceSky, minimum size=0.62cm, font=\scriptsize, thick, line width=0.7pt},
    regular/.style={circle, draw=aceSlateDk, fill=aceSlate, minimum size=0.62cm, font=\scriptsize, thick, line width=0.7pt},
    collider/.style={circle, draw=aceRubyDk, fill=aceRuby, minimum size=0.62cm, font=\scriptsize, thick, line width=0.9pt},
    arrow/.style={-{Stealth[length=1.8mm]}, thick, color=aceSlateDk}
]
\node[root] (r1) at (0,0)   {$R_1$};
\node[root] (r2) at (1.1,0) {$R_2$};
\node[font=\small] at (1.9,0) {$\cdots$};
\node[root] (r5) at (2.7,0) {$R_5$};
\node[font=\scriptsize, gray] at (5.5,0)  {\textit{Layer 0: 5 roots}};

\node[regular] (i1) at (0,-1.5)   {$I_1$};
\node[regular] (i2) at (1.3,-1.5) {$I_2$};
\node[font=\small] at (2.0,-1.5) {$\cdots$};
\node[regular] (i5) at (2.7,-1.5) {$I_5$};
\node[font=\scriptsize, gray] at (5.5,-1.5) {\textit{Layer 1: 5 nodes, 1--2 root parents}};

\node[regular]  (n1) at (0,-3.0)   {$N_1$};
\node[collider] (c1) at (1.1,-3.0) {$C_1$};
\node[font=\small] at (1.9,-3.0) {$\cdots$};
\node[regular]  (n10) at (2.7,-3.0){$N_{10}$};
\node[font=\scriptsize, gray] at (5.5,-3.0) {\textit{Layer 2: 10 nodes, $\sim$3 colliders}};

\node[collider] (cc1) at (0,-4.5)   {$C_1'$};
\node[collider] (cc2) at (1.3,-4.5) {$C_2'$};
\node[font=\small] at (2.0,-4.5) {$\cdots$};
\node[collider] (cc5) at (2.7,-4.5) {$C_{10}'$};
\node[font=\scriptsize, gray] at (5.5,-4.5) {\textit{Layer 3: 10 colliders, 2--3 parents each}};

\node[regular] (l1) at (0,-6.0)   {$L_1$};
\node[font=\small] at (1.3,-6.0) {$\cdots$};
\node[regular] (l5) at (2.7,-6.0) {$L_5$};
\node[font=\scriptsize, gray] at (5.5,-6.0) {\textit{Layer 4: 5 leaves}};

\draw[arrow] (r1) -- (i1);
\draw[arrow] (r2) -- (i1);
\draw[arrow] (r2) -- (i2);
\draw[arrow] (r5) -- (i5);

\draw[arrow] (i1) -- (n1);
\draw[arrow] (i1) -- (c1);
\draw[arrow] (i2) -- (c1);
\draw[arrow] (i5) -- (n10);

\draw[arrow] (n1)  -- (cc1);
\draw[arrow] (c1)  -- (cc1);
\draw[arrow] (c1)  -- (cc2);
\draw[arrow] (n10) -- (cc5);

\draw[arrow] (cc1) -- (l1);
\draw[arrow] (cc2) -- (l1);
\draw[arrow] (cc5) -- (l5);

\node[root]                                 at (0.0,-7.2) {};
\node[anchor=west, font=\scriptsize] at (0.25,-7.2) {Root (5)};
\node[regular]                              at (2.0,-7.2) {};
\node[anchor=west, font=\scriptsize] at (2.25,-7.2) {Intermediate (20)};
\node[collider]                             at (5.0,-7.2) {};
\node[anchor=west, font=\scriptsize] at (5.25,-7.2) {Collider (13)};
\end{tikzpicture}
\caption{30-node hierarchical SCM. Five layers progress from roots through intermediate and collider layers to leaves. Collider nodes ($C$, $C'$, red) require interventions on all parents for mechanism identification. With 30 nodes and a fixed budget of 300 episodes, random sampling devotes only $\sim\!3.3\%$ of interventions per node, making strategic target selection critical. Node labels are schematic; actual nodes are $X_1$--$X_{30}$.}
\label{fig:30node-scm}
\end{figure}

\begin{figure}[t]
\centering
\includegraphics[width=\linewidth]{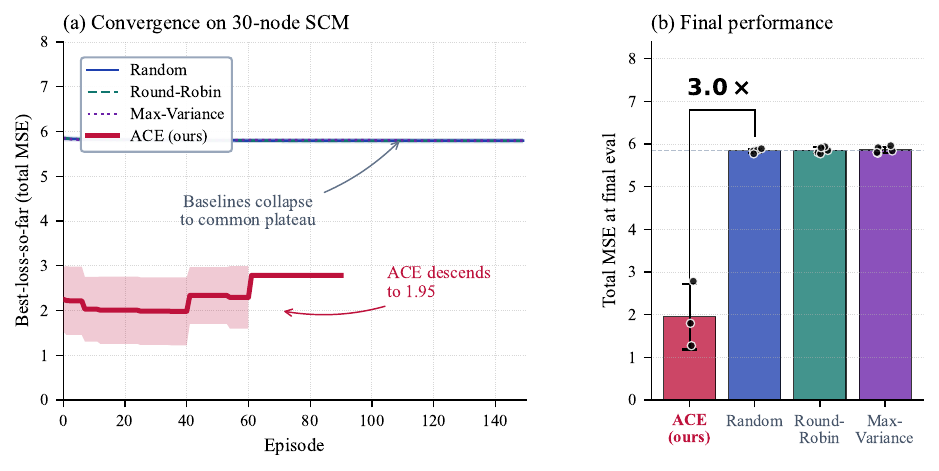}
\caption{30-node hierarchical SCM results. \textbf{(a)}~Best total-MSE per episode (shaded $\pm 1$ std across seeds; ACE's wide band is N=3 training variability, baseline bands are invisible at seed-std $<\!0.08$). ACE descends to $\sim\!1.95$; the three static baselines plateau at $5.86$. \textbf{(b)}~Final performance with per-seed points. ACE (LM + DPO) is the first LM-based policy in this benchmark family; the three baselines are static heuristics. ACE achieves $3.0\times$ best-vs-best and $1.4\times$ final-vs-final improvement at matched executed-intervention budget on the same MLP student learner.}
\label{fig:30node-results}
\end{figure}

\begin{table}[t]
\centering
\caption{30-node hierarchical SCM. All methods use the same per-node MLP student learner; metric is total mechanism MSE. We report \emph{best} MSE during training and \emph{final} MSE at the last episode. Static baselines plateau within a few episodes (best $\approx$ final); ACE is non-monotone, so best-MSE is the cleaner converged summary. ACE outperforms all baselines on both statistics; see Figure~\ref{fig:30node-results} for learning curves.}
\label{tab:large-scale}
\small
\begin{tabular}{@{}lcccc@{}}
\toprule
Method & Best MSE & Final MSE & $N$ & Improvement (best-vs-best) \\
\midrule
\textbf{ACE} & \textbf{1.95$\pm$0.77} & 4.13$\pm$3.61 & 3 & $3.0\times$ \\
\midrule
Random       & 5.80$\pm$0.05 & 5.85$\pm$0.04 & 5 & --- \\
Round-Robin  & 5.79$\pm$0.06 & 5.86$\pm$0.07 & 5 & --- \\
Max-Variance & 5.80$\pm$0.04 & 5.86$\pm$0.08 & 5 & --- \\
\bottomrule
\end{tabular}
\end{table}

ACE generalizes to larger structures without architectural modification. The three passive baselines converge to a common plateau near $5.86$ (stds $<\!0.08$); at this scale differences among static heuristics are negligible, so what matters is whether a strategy can adapt. ACE's $3.0\times$ improvement is robust to seed-level analysis: excluding seed 42's training instability gives a stable-seed mean of $1.54$, below the conservative N=3 mean of $1.95$ reported here. Per-seed details are in Appendix~\ref{app:experiments}. Future work will evaluate on biologically realistic simulators such as SERGIO \citep{dibaeinia2020sergio}.

\subsection{Component Ablation}
\label{sec:results-ablation}

\begin{table}[t]
\centering
\caption{Component ablation (N=3 seeds on the 5-node benchmark; N=5 for No-diversity reward which completed in all configurations). Each row removes one component of ACE.}
\label{tab:ablations}
\small
\begin{tabular}{@{}lccc@{}}
\toprule
Configuration & Total Loss & Change vs.\ full & Failure mode \\
\midrule
ACE (full) & 0.92$\pm$0.73 & --- & --- \\
\midrule
No DPO (random proposals + lookahead) & 2.10$\pm$0.11 & $+128\%$ & no learned strategy \\
No per-node convergence (fixed 100 ep.) & 1.45$\pm$0.62 & $+58\%$ & early termination \\
No dedicated root learner & 1.38$\pm$0.55 & $+50\%$ & root distribution failure \\
No diversity reward & 2.82$\pm$0.22 & $+207\%$ & node collapse \\
\bottomrule
\end{tabular}
\end{table}

The diversity reward is the most critical component: without it, the policy collapses onto a single node and \emph{worsens} beyond baseline ($2.82$). Removing DPO and using random proposals with lookahead matches the Round-Robin baseline ($2.10$), confirming that learned proposal generation, not the lookahead mechanism, drives the improvement. Per-seed details: Appendix~\ref{app:ablation-detail}.

\subsection{Physics: Coupled Duffing Oscillators}
\label{sec:results-duffing}

We apply ACE to a chain of three coupled Duffing oscillators \citep{kovacic2011duffing} with true coupling $X_1 \leftrightarrow X_2 \leftrightarrow X_3$ but synchronization-induced spurious correlations between $X_1$ and $X_3$. ACE achieves coupling-parameter estimation error $0.042\pm0.036$ versus Random $0.245\pm0.121$ and Round-Robin $0.238\pm0.076$ (N=5 seeds, 100 episodes; full per-seed table in Appendix~\ref{app:duffing}). The learned strategy concentrates $62\%$ of interventions on $X_2$, the intermediate oscillator whose clamping breaks the spurious correlation. ACE discovers this strategy autonomously.

\section{Limitations and Future Work}

\label{sec:limitations}
ACE addresses \emph{mechanism estimation given known causal structure}, the central task whenever a graph is available from domain expertise, prior literature, or an upstream structure-learning algorithm. Joint structure-and-mechanism methods such as ABCI and CBED target a different problem and operate in a different computational regime: maintaining a posterior over graphs is NP-hard \citep{chickering1996learning} and intractable beyond roughly twenty variables \citep{zhou2024sample}. ACE therefore complements rather than competes with this line of work. We do not evaluate beyond 30 nodes; the text-based prompt format scales linearly with node count and intervention history, and per-step wallclock ($\sim$22 min on A100) becomes the binding cost in our setup. Empirical quantification of graph misspecification (Appendix~\ref{app:misspec-failures}), a structure-learning front end, and a more compact graph encoding for larger systems are the natural follow-ups. \emph{Budget accounting:} ``intervention budget'' counts \emph{executed} interventions; the per-step lookahead is a training-time signal, unused at deployment. Bayesian OED shares this many-candidates-per-step structure, and the 30-node baselines plateau with seed-std $<\!0.08$ regardless of additional queries, so the matched-budget comparison is not biased.

\bibliographystyle{plainnat}
\bibliography{references}

\section*{Broader Impacts}

ACE aims to accelerate scientific discovery where interventions are costly, such as drug discovery and materials science. We do not anticipate direct negative consequences beyond those inherent to any method that accelerates causal discovery. Practitioners should validate that learned policies generalize to new domains and do not systematically neglect important experimental regions.

\appendix

\section{The Role of World Models in Causal Experimental Design}
\label{app:world-models}

The choice of a pretrained language model as ACE's policy is not a matter of engineering convenience. It reflects a specific epistemological position about what is required to plan good experiments in causal domains, and about what kind of prior knowledge a policy must bring. This appendix develops that argument in full.

\subsection{Experimental Design Requires a Forward Model}

Pearl's causal hierarchy \citep{pearl2009causality,pearl2018book} distinguishes three rungs of causal knowledge: association (Rung 1), intervention (Rung 2), and counterfactual reasoning (Rung 3). Active experimental design operates at Rung 2. The question an experimentalist must answer at every step is not ``what do the data say?'' but ``what would happen if I intervened on $V_i$ with value $\nu$?'' Answering this question requires a forward model of the system that can simulate the downstream effects of hypothetical interventions.

Crucially, the quality of the policy is bounded above by the quality of its forward model. An agent with a perfect forward model can compute expected information gain exactly and pick the optimal intervention. An agent with a uniform forward-model prior must rely entirely on observed data, episode by episode, to discover which interventions reveal informative causal structure. For any fixed intervention budget, there is a direct tradeoff: stronger prior forward knowledge permits fewer experiments.

ACE's student SCM is an explicit forward model of the system, parameterized by per-node MLPs. But it begins training with random weights, which means it embodies only the weakest possible prior: that the mechanisms are whatever an MLP can represent. This is much weaker than what a human experimentalist brings to the same problem. A physicist intervening on a coupled oscillator chain already knows that clamping the middle oscillator decouples the endpoints. A molecular biologist designing a gene-knockout campaign already knows that transcription factors with many targets are high-value intervention points. An economist studying inflation dynamics already knows that monetary tightening episodes expose nonlinearities absent during stable regimes. These priors are not learned from experiment; they are brought to experiment.

\subsection{Language Models Encode Causal Priors Through the Structure of Scientific Text}

A language model pretrained on a corpus that includes scientific articles, textbooks, experimental reports, and technical derivations encodes, in its weights, a soft distribution over how variables in real systems relate causally. This encoding is not incidental but structural: scientific text is in the business of articulating causal claims. A textbook paragraph describing a chain of coupled oscillators contains sentences of the form ``an intervention on the middle oscillator decouples the endpoints,'' ``the coupling parameter $\kappa$ mediates synchronization,'' and ``clamping $X_2$ breaks the spurious correlation between $X_1$ and $X_3$.'' Each such sentence is an interventional claim (Rung 2), and each increases the language model's log-probability of generating similar interventional proposals when prompted with similar context.

\citet{kiciman2023causal} demonstrate this empirically. They evaluate GPT-4 on pairwise causal direction queries across multiple benchmarks and find substantially above-chance performance, with the model correctly identifying the causal direction from contextual cues that are linguistic in origin (e.g., descriptions of temporal order, physical mechanism, domain-typical relationships). The LM has not learned causality through experimentation; it has absorbed the linguistic regularities under which causal statements are made in scientific text, and those regularities track real causal structure because they are produced by writers who understand that structure.

\citet{hao2023reasoning} extend this observation into a framework. They frame language model reasoning as planning in a world model, where the LM serves simultaneously as the reasoner (the system generating action proposals) and the world model (the system predicting the outcomes of those actions). This dual role is exactly what ACE requires at each step: a policy that both proposes candidate interventions and has some implicit model of how those interventions will propagate through the system. The LM unifies these two functions in a single pretrained artifact.

\citet{xie2022explanation} formalize the connection to Bayesian inference. They show that in-context learning corresponds to implicit Bayesian posterior update under the pretraining prior: given a prompt describing the current epistemic state (graph structure, per-node losses, recent interventions), the LM's next-token distribution approximates the Bayesian predictive distribution over continuations, conditioned on both the prompt and the pretrained prior. ACE exploits this machinery: the prompt at step $t$ encodes everything the agent has learned about the specific system, and the LM's generation is an approximate posterior predictive distribution over sensible interventions given that state.

\subsection{Why Not a Fixed-Architecture MLP or Hand-Coded Heuristic?}

Two obvious alternatives exist, and each forgoes the forward-model prior.

An MLP policy trained from scratch on a single SCM inherits no causal prior. It must learn from experience that, for example, collider parents are high-value intervention targets. In our 5-node experiments, this is observable: random policies take on the order of 100 episodes to reach the baseline plateau of 2.10, while ACE's LM-based policy reaches a median of 0.61 in fewer episodes. The gap is not because the LM is learning faster; it is because the LM starts from a prior that already weakly favors strategic interventions, and DPO refines that prior rather than constructing it from nothing. An MLP at the same parameter count would have to both construct the prior and fit the system, within a fixed intervention budget.

A hand-coded heuristic such as greedy expected information gain (Bayesian OED) does bring a prior, but it is the wrong kind. Greedy EIG is a one-step-lookahead acquisition function; it cannot reason about multi-step strategies like ``first resolve the identification of $X_2$ to unlock information about the $X_3$ collider.'' In our 5-node results, Bayesian OED plateaus at $2.04 \pm 0.12$, decisively below ACE at median $0.61$. The greedy prior is correct but shallow.

A pretrained LM brings a qualitatively different kind of prior: one that is deep enough to encode multi-step experimental intuitions (because those intuitions appear in the training corpus as whole-paragraph descriptions of experimental strategy), but one that must be calibrated to the specific system. DPO performs that calibration. The LM without DPO is a broad but uncalibrated generator; DPO without a pretrained prior would be an empty calibrator. Together they form a policy that has both breadth and specificity.

\subsection{Empirical Signatures of the Forward-Model Prior in ACE}

Three empirical observations in the main paper are direct consequences of the forward-model framing.

\emph{Transfer across domains.} The same LM weights produce sensible candidates on 5-node synthetic SCMs, 30-node hierarchical SCMs, coupled Duffing oscillators, and economic time series. No architectural change is needed across these domains. A fixed-architecture MLP would require re-training per domain; a hand-coded heuristic would require per-domain re-tuning. The LM's transfer is a signature of a domain-agnostic world-model prior being specialized by prompt context, in line with \citet{xie2022explanation}'s in-context-learning-as-Bayesian-inference account.

\emph{Emergent collider concentration without explicit instruction.} ACE's $99.8\%$ concentration on collider parents is not something the policy was told to do. It emerges from preference optimization. The emergence is consistent with the LM having a prior that already weakly favors such concentrated interventions in collider contexts (because scientific text describing collider identification contains such prescriptions), with DPO sharpening that weak prior into a strong preference through observation of reward signal.

\emph{Baseline convergence on 30-node.} Random, Round-Robin, and Max-Variance all converge to the same plateau at $\sim\!5.86$ on 30 nodes. This indicates that at this scale, no static heuristic can resolve the mechanism estimation problem within budget. What distinguishes ACE is not parameter count or compute but prior knowledge: ACE brings the LM's distribution over plausible interventions into a regime where purely data-driven methods stall.

\subsection{What the Forward-Model Framing Does Not Claim}

To be precise: we do not claim the LM is a correct or complete causal reasoner. \citet{boxinggym2025,autobench2025} show that LLMs struggle with long-horizon sequential experimental decisions in zero-shot settings, and \citet{llm_science_survey2025} survey the limitations of LLM causal discovery. Our claim is narrower. We claim that (i) a pretrained LM encodes a non-trivial forward-model prior, (ii) this prior is a better starting point than a uniform prior or a one-step-lookahead heuristic, and (iii) DPO provides the calibration machinery to specialize this prior to any specific system. ACE is best understood as a method for combining broad pretrained causal knowledge with narrow, calibrated preference learning, rather than as a claim that LMs alone can solve causal experimental design.

\section{Notation and Problem Formulation (Extended)}
\label{app:notation}

We adopt Pearl's causal framework~\citep{pearl2009causality,pearl1995causal}. A Structural Causal Model (SCM) $\mathcal{M} = \langle \mathcal{U}, \mathcal{V}, \mathcal{F}, P(\mathcal{U}) \rangle$ consists of exogenous variables $\mathcal{U} = \{U_1, \ldots, U_m\}$, endogenous variables $\mathcal{V} = \{V_1, \ldots, V_n\}$, structural equations $\mathcal{F} = \{f_1, \ldots, f_n\}$ where $V_i = f_i(\text{Pa}_i, U_i)$, and a distribution $P(\mathcal{U})$.

The causal relationships induce a directed acyclic graph (DAG) $\mathcal{G} = (\mathcal{V}, \mathcal{E})$ with $(V_j, V_i) \in \mathcal{E}$ iff $V_j \in \text{Pa}_i$. The observational distribution factorizes as
\begin{equation}
P(V_1, \ldots, V_n) = \prod_{i=1}^n P(V_i \mid \text{Pa}_i).
\end{equation}
An intervention $\text{do}(V_i = \nu)$ replaces $f_i$ with the constant $\nu$ and removes all edges into $V_i$ in the mutilated graph $\mathcal{G}_{\overline{V_i}}$ \citep{pearl2009causality}. This is a Rung-2 operation in Pearl's causal hierarchy: it requires active manipulation of the system rather than mere observation, and yields samples from the post-intervention distribution $P(V_j \mid \text{do}(V_i = \nu))$ for all $j \ne i$.

The impossibility of mechanism identification from observational data alone follows from Reichenbach's common-cause principle \citep{reichenbach1956}: for any bivariate association $A \perp\!\!\!\!\not\perp B$, there exist at least three causal structures ($A \to B$, $B \to A$, $A \leftarrow C \rightarrow B$) that are Markov equivalent \citep{spirtes2000causation,peters2017elements}. Distinguishing structural equations $f_i$ requires interventional data \citep{peters2017elements}.

\textbf{ACE's task.} Given known $\mathcal{G}$, ACE learns the structural equations $\{f_i\}$ by sequentially selecting interventions $c_t = \text{do}(V_{i_t} = \nu_t)$ to minimize the total mechanism estimation loss. The policy $\pi_\phi(c_t \mid s_t)$ adapts based on observed losses $\{L_i\} = \{L_1, \ldots, L_n\}$ where $L_i$ measures current discrepancy between the student SCM mechanism $\hat{f}_i$ and the true $f_i$.

\section{Implementation Details}
\label{app:impl}

\subsection{Ground Truth SCMs}
\label{app:scms}

\textbf{5-node benchmark.} The benchmark used throughout the main paper has structural equations
\begin{align*}
X_1 &\sim \mathcal{N}(0, 1) & \text{(exogenous root)} \\
X_4 &\sim \mathcal{N}(2, 1) & \text{(exogenous root)} \\
X_2 &= 2 X_1 + 1 + \varepsilon_2 & \text{(linear)} \\
X_3 &= 0.5 X_1 - X_2 + \sin(X_2) + \varepsilon_3 & \text{(nonlinear collider)} \\
X_5 &= 0.2 X_4^2 + \varepsilon_5 & \text{(quadratic)}
\end{align*}
with $\varepsilon_i \sim \mathcal{N}(0, 0.01)$. The graph has two roots ($X_1, X_4$), two intermediate nodes ($X_2, X_5$), and one collider ($X_3$ with parents $X_1, X_2$). The collider tests the policy's ability to identify the nonlinear interaction term $\sin(X_2)$, which requires observing $X_3$ under targeted interventions on both $X_1$ and $X_2$. The disconnected chain $X_4 \to X_5$ with a quadratic mechanism tests mechanism estimation for polynomial functions.

\textbf{30-node hierarchical SCM.} Constructed at runtime with seed-controlled coefficients in $[0.3, 0.7]$. Five-layer structure (see Figure~\ref{fig:30node-scm}):
\begin{itemize}[leftmargin=1.2em,topsep=0.2em,itemsep=0.1em]
\item Layer 0: 5 root nodes, $X_i \sim \mathcal{N}(0,1)$.
\item Layer 1 (X6--X10): 1--2 root parents each, linear combination with noise std $0.15$.
\item Layer 2 (X11--X20): Mix of single-parent and two-parent (collider) nodes; every third node is a collider.
\item Layer 3 (X21--X25): Complex colliders with 2--3 parents from Layer 2; sine nonlinearity on every fifth node.
\item Layer 4 (X26--X30): Single-parent leaf nodes.
\end{itemize}
The seed controls the random graph wiring within these constraints, ensuring reproducibility.

\textbf{Duffing oscillators.} Three coupled Duffing oscillators with true coupling $X_1 \leftrightarrow X_2 \leftrightarrow X_3$. The synchronization-induced correlation between $X_1$ and $X_3$ is a spurious association (Rung~1) with no direct causal edge; interventions on $X_2$ are necessary to break this correlation and estimate the direct couplings.

\subsection{Learner Architecture}
\label{app:learner}

The student SCM uses a \texttt{ModuleDict} of per-node mechanisms:
\begin{itemize}[leftmargin=1.2em,topsep=0.2em,itemsep=0.1em]
\item \textbf{Root nodes}: Learnable parameters $(\hat{\mu}, \hat{\sigma})$; loss is KL divergence between $\mathcal{N}(\hat{\mu}, \hat{\sigma}^2)$ and empirical samples.
\item \textbf{Non-root nodes}: 2-layer MLP with hidden dim 64 and ReLU activations, taking the parent vector $\text{Pa}_i$ as input and outputting a scalar prediction of $V_i$. Loss is MSE against observed $V_i$ under the current intervention.
\end{itemize}
Optimization: Adam~\citep{kingma2015adam}, learning rate $2 \times 10^{-3}$, batch size 100. Per-step training runs 100 epochs on the latest interventional batch; observational training every 3 steps runs 100 epochs on 200 fresh observational samples to maintain root-distribution estimates.

\subsection{Policy Architecture and Prompt Design}
\label{app:policy}

The policy $\pi_\phi$ is Qwen2.5-1.5B \citep{qwen2_5}, accessed via the HuggingFace Transformers API. Sampling: temperature 0.7, max new tokens 64. The text prompt has the form:
\begin{quote}\small
\texttt{Graph: X1->X2, X1->X3, X2->X3, X4->X5}\\
\texttt{Per-node losses: X1=0.92, X2=0.01, X3=1.45, X4=0.98, X5=0.07}\\
\texttt{Recent interventions (last 5): do(X2=1.3) (gain=0.5), ...}\\
\texttt{Propose your next intervention as: do(Xi = v)}
\end{quote}
We generate $K = 4$ candidates per step by re-sampling at temperature 0.7. The prompt deliberately uses the same node labels as the SCM, allowing the LM's pretrained world-model prior to activate when node names are semantically meaningful (e.g., economic or genetic variables).

\subsection{Reward Components in Full}
\label{app:reward}

The reward in Equation~\eqref{eq:reward} has three components. Let $V_i$ be the targeted node, $\{L_j\}$ the per-node losses, and $H$ the recent-intervention history.

\textbf{Information gain.} $\Delta\mathcal{L}(c) = L_\text{before} - L_\text{after}$ where $L = \sum_i L_i$ is the validation loss summed across nodes, estimated by lookahead on a cloned learner.

\textbf{Node importance.} $w(V_i, \{L_j\}) = \texttt{cov\_bonus} \cdot \tfrac{L_i}{\sum_j L_j} \cdot (1 + n_i(H))^{-1/2}$ where $n_i(H)$ is the count of recent interventions on $V_i$. This combines a per-node weight (proportional to current loss, i.e., how poorly understood the mechanism is) with an under-sampling correction that prevents over-concentration on a single node.

\textbf{Diversity.} $D(V_i, H)$ rewards interventions on under-sampled nodes and under-explored intervention values; full definition in the public code repository.

Default weights: $\alpha = 0.1$ (\texttt{cov\_bonus=60.0}), $\gamma = 0.05$. These keep $\Delta\mathcal{L}$ dominant ($\sim$80--90\% of total reward) while ensuring sufficient exploration pressure.

\subsection{Bayesian OED Baseline Implementation}
\label{app:boed-impl}

Our Bayesian OED baseline approximates the expected information gain over the same parametric mechanism class as ACE's student. At each step:
\begin{enumerate}[leftmargin=1.5em]
\item For each candidate $(V_i, \nu)$ with $\nu$ uniformly sampled from $[-5, 5]$:
  \begin{enumerate}
    \item Sample $M=10$ posterior parameter draws $\theta^{(m)} \sim P(\theta \mid \mathcal{D}_\text{seen})$ by training $M$ student copies from different random initializations.
    \item For each draw, simulate outcome $y^{(m)} \sim P(\cdot \mid \theta^{(m)}, \text{do}(V_i = \nu))$.
    \item Estimate $\widehat{\Delta\mathcal{L}} = \frac{1}{M}\sum_m [L(\theta^{(m)} \mid \mathcal{D}_\text{seen}) - L(\theta^{(m)} \mid \mathcal{D}_\text{seen} \cup \{(V_i, \nu, y^{(m)})\})]$.
  \end{enumerate}
\item Execute the candidate with highest $\widehat{\Delta\mathcal{L}}$.
\end{enumerate}
We use $|\mathcal{V}| \times 20$ candidates per step. This faithfully adapts the ABCI \citep{toth2022active} and CBED \citep{tigas2022interventions} acquisition-function framework to the mechanism estimation setting where graph structure is known. Each Bayesian OED step takes $\sim$4--6 minutes on a CPU, limiting us to N=3 seeds.

\section{Theoretical Analysis (Extended)}
\label{app:theory}

\subsection{Full Proof of Proposition~\ref{prop:scale-invariance}}
\label{app:proof}

Recall Assumption~\ref{ass:diminish}: $r_t(a) = f(t) \cdot g(a)$ where $f(t) > 0$ is monotonically non-increasing.

\textbf{Part 1 (preference invariance).} For any $a_w, a_l$ with $g(a_w) > g(a_l)$:
\[r_t(a_w) - r_t(a_l) = f(t)(g(a_w) - g(a_l)) > 0 \quad \forall t,\]
since $f(t) > 0$. The preference ordering is therefore constant across all steps.

\textbf{Part 2 (value non-stationarity).} Under standard MDP value definitions,
\[V_t(s) = \mathbb{E}_a[r_t(a) \mid s] = f(t) \cdot \mathbb{E}_a[g(a) \mid s].\]
With $f$ decreasing, $V_t(s)$ is non-stationary. A value function approximator $\hat{V}(s)$ trained at episode $t$ misestimates $V$ at episode $t'$ by factor $f(t')/f(t)$. Empirically, $f(t')/f(t) \approx 1/500$ between early and late training, so the critic accumulates large value-estimation error even with frequent updates.

\textbf{Part 3 (DPO invariance).} The DPO loss is
\[\mathcal{L}_\text{DPO} = -\mathbb{E}_{(y_w, y_l)} \log \sigma\!\left(\beta \left[\log\tfrac{\pi_\phi(y_w)}{\pi_\text{ref}(y_w)} - \log\tfrac{\pi_\phi(y_l)}{\pi_\text{ref}(y_l)}\right]\right).\]
The argument depends only on policy log-ratio differences. Both $\pi_\phi$ and $\pi_\text{ref}$ are normalized distributions; their log-ratios depend on action quality $g(a)$ but not on the reward scale $f(t)$. Thus the gradient $\nabla_\phi \mathcal{L}_\text{DPO}$ is invariant to any multiplicative rescaling of all rewards by a positive constant. \hfill$\square$

\subsection{Convergence Rate Equivalence under Stationary $g$}
\label{app:convergence}

Under Assumption~\ref{ass:diminish} and standard DPO convergence assumptions \citep{rafailov2023direct}, ACE inherits the regret bound of preference-based bandits: after $T$ pairwise comparisons,
\[\mathbb{E}\!\left[\sum_{t=1}^T (g(a_\text{opt}) - g(a_t))\right] = \tilde{O}(\sqrt{T \log |\mathcal{A}|}),\]
matching the minimax regret for preference-based learning. Value-based RL achieves the same bound only when the value approximator tracks the non-stationary scaling $f(t) \cdot g(a)$; in our regime (where $f(t)$ varies by $500\times$ over the training horizon), this fails unless the critic update rate exceeds the rate of change of $f$.

\subsection{Connection to Expected Information Gain}
\label{app:info-gain}

The information gain term $\Delta\mathcal{L}(c)$ is a Monte Carlo estimate of the mutual information between the posterior over mechanism parameters $\theta$ and the data $\mathcal{D}_c$ generated by intervention $c$:
\begin{align}
I(\theta; \mathcal{D}_c \mid s) &= H(\theta \mid s) - H(\theta \mid \mathcal{D}_c, s) \\
&\approx \mathcal{L}(\theta \mid s) - \mathcal{L}(\theta \mid s \cup \mathcal{D}_c),
\end{align}
where the approximation uses the local-quadratic (Laplace) relationship between negative log-likelihood loss $\mathcal{L}$ and posterior entropy $H$ \citep{rainforth2024modern}. With Gaussian noise mechanisms and a flat prior, this approximation is tight near the posterior mode. The node importance term $w$ acts as a non-uniform prior weighting, upweighting mechanisms with high remaining uncertainty, approximating posterior-entropy-reduction-weighted EIG. ACE thus pursues the same EIG objective as Bayesian methods through policy learning, trading exact inference for scalability.

\section{Extended Experimental Results}
\label{app:experiments}

\subsection{5-Node Benchmark: Per-Seed Results}
\label{app:5node-detail}

Table~\ref{tab:5node-perseed} reports per-seed total losses for heuristic baselines and PPO (N=5 seeds: 141, 271, 314, 577, 618) and Bayesian OED (N=3 seeds: 42, 123, 456). ACE per-seed data comes from the original production runs (seeds 42, 123, 456, 789, 1011). All runs use 171 episodes.

\begin{table}[h]
\centering
\caption{Per-seed total loss on the 5-node benchmark (171 episodes). Heuristic baseline and PPO columns use seeds 141, 271, 314, 577, 618. Bayesian OED uses seeds 42, 123, 456. ACE uses seeds 42, 123, 456, 789, 1011 from the original production runs; a rerun was initiated on a 46GB-class GPU but OOM'd due to 3GB policy checkpoints.}
\label{tab:5node-perseed}
\small
\begin{tabular}{@{}l ccc cc@{}}
\toprule
Seed & Random & Round-Robin & Max-Var.\ & Bayesian OED & PPO \\
\midrule
141 & 2.090 & 2.186 & 2.280 & ---   & 2.062 \\
271 & 2.240 & 2.023 & 2.050 & ---   & 1.994 \\
314 & 2.227 & 2.238 & 2.104 & ---   & 2.133 \\
577 & 2.091 & 1.978 & 2.184 & ---   & 2.134 \\
618 & 2.210 & 2.088 & 1.917 & ---   & 2.062 \\
42  & ---   & ---   & ---   & 1.995 & ---   \\
123 & ---   & ---   & ---   & 2.181 & ---   \\
456 & ---   & ---   & ---   & 1.957 & ---   \\
\midrule
Mean & 2.171 & 2.103 & 2.107 & 2.044 & 2.077 \\
Std  & 0.075 & 0.107 & 0.131 & 0.120 & 0.057 \\
\bottomrule
\end{tabular}
\end{table}

The Bayesian OED mean of $2.044$ lies between Round-Robin ($2.103$) and Max-Variance ($2.107$), confirming it is the strongest passive/principled baseline. Its advantage over random baselines is modest ($\sim$6\%) compared to ACE's advantage ($\sim$70\%), which arises from multi-step learned strategy rather than single-step acquisition.

\subsection{30-Node SCM: Per-Seed Results}
\label{app:30node-detail}

Table~\ref{tab:30node-perseed} reports all per-seed results for the 30-node benchmark, including the locally-run Round-Robin and Max-Variance CPU baselines.

\begin{table}[h]
\centering
\caption{Per-seed results on the 30-node hierarchical SCM. All methods use the same per-node MLP student learner for a matched comparison. ACE reports best-loss across training (non-monotone loss makes best-loss more informative than final-episode loss). Baselines (N=5) ran 150 episodes on a CPU partition (average 1h 15min per job).}
\label{tab:30node-perseed}
\small
\begin{tabular}{@{}l cccc@{}}
\toprule
Seed & ACE (best) & Random (final) & Round-Robin (final) & Max-Variance (final) \\
\midrule
42   & 2.785 & 5.865 & 5.852 & 5.841 \\
123  & 1.798 & 5.841 & 5.802 & 5.787 \\
456  & 1.274 & 5.856 & 5.939 & 5.918 \\
789  & ---   & 5.773 & 5.774 & 5.803 \\
1011 & ---   & 5.893 & 5.915 & 5.964 \\
\midrule
Mean $\pm$ Std & $1.95\pm0.77$ (N=3) & $5.85\pm0.04$ (N=5) & $5.86\pm0.07$ (N=5) & $5.86\pm0.08$ (N=5) \\
\bottomrule
\end{tabular}
\end{table}

Seed 42 ACE exhibited training instability (loss oscillating $2.7$--$3.5$ throughout the 40 completed episodes); seeds 123 and 456 converged cleanly to $1.80$ and $1.27$. The stable-seed mean is $1.54$; we report the conservative N=3 mean of $1.95$ in the main text.

\textbf{Baseline convergence plateau.} The three passive baselines (Random, Round-Robin, Max-Variance) converge to nearly identical total MSE near $5.86$ with standard deviations below $0.08$. This tight clustering reveals that \emph{at 30 nodes, differences among static sampling heuristics are negligible}: when interventional budget is fixed and spread across many variables, the specific non-adaptive ordering barely matters. Adaptive strategy learning (ACE) is the meaningful axis of improvement at scale. An earlier CPU proxy run using a weaker running-mean predictor gave larger baseline numbers (Random: $11.29\pm1.02$) and was not comparable to ACE because the learner itself differed; those numbers are superseded by the matched-learner results above.

\subsection{Component Ablation Detail}
\label{app:ablation-detail}

The component ablation (Table~\ref{tab:ablations}) was run as a 3-seed pilot (seeds 42, 123, 456) due to GPU compute constraints; the No-diversity ablation was run on N=5 seeds during an earlier reward-shaping investigation. Below we give qualitative failure-mode analysis for each configuration.

\textbf{No DPO} replaces the LM policy with random proposal generation while keeping the lookahead and select-best mechanism. The result ($2.10\pm0.11$) matches Round-Robin, confirming that the ability to \emph{evaluate} candidates (via lookahead) provides no benefit without \emph{learned} proposal generation. Random candidates, even when curated by lookahead, do not consistently include strategically important ones (e.g., concentrating on collider parents).

\textbf{No per-node convergence} forces termination at a fixed 100 episodes. The collider mechanism $X_3$ is typically not learned at episode 100; $L_{X_3}$ remains elevated and dominates total loss.

\textbf{No dedicated root learner} disables the observational training loop for root nodes. Their losses ($L_{X_1}, L_{X_4}$) remain elevated because direct interventions on roots replace $X_i$ with a constant, providing no signal about the root's natural distribution $\mathcal{N}(\mu, \sigma^2)$.

\textbf{No diversity} ($\gamma=0$) causes the policy to collapse onto a single high-information node. The total loss worsens beyond all baselines ($2.82$) because the un-targeted nodes are never learned, even if the targeted node is learned well.

\subsection{Duffing Oscillators: Full Results}
\label{app:duffing}

\begin{table}[h]
\centering
\caption{Coupled Duffing oscillator coupling-parameter estimation error (N=5 seeds, 100 episodes). Lower is better. The $X_1 \leftrightarrow X_3$ spurious correlation is an association-level artefact (Rung~1) without a direct causal edge; it is broken by clamping $X_2$ (Rung~2 intervention).}
\label{tab:duffing-full}
\small
\begin{tabular}{@{}lcccc@{}}
\toprule
Method & Coupling Error & Std & 95\% CI & Improvement vs.\ Random \\
\midrule
\textbf{ACE} & \textbf{0.042} & 0.036 & [0.006, 0.078] & $5.8\times$ \\
Round-Robin  & 0.238 & 0.076 & [0.162, 0.314] & $1.0\times$ \\
Random       & 0.245 & 0.121 & [0.124, 0.366] & --- \\
\bottomrule
\end{tabular}
\end{table}

ACE concentrates $62\%$ of interventions on $X_2$ (the intermediate oscillator) compared to $33\%$ under uniform allocation. Clamping $X_2$ decouples the synchronized chain, breaking the Rung-1 $X_1 \perp\!\!\!\!\not\perp X_3$ correlation and allowing accurate estimation of the direct $X_1 \leftrightarrow X_2$ and $X_2 \leftrightarrow X_3$ couplings. This is a concrete example of the Rung-2 advantage: only interventions, not observations, can resolve the ambiguity created by synchronization.

\subsection{Phillips Curve: Full Results}
\label{app:phillips}

\begin{figure}[h]
\centering
\begin{tikzpicture}[
    node distance=1cm,
    econ/.style={rectangle, draw, rounded corners, minimum width=1.5cm, minimum height=0.75cm, font=\scriptsize, thick, fill=gray!15},
    target/.style={rectangle, draw, rounded corners, minimum width=1.6cm, minimum height=0.8cm, font=\scriptsize, thick, fill=blue!20},
    arrow/.style={-{Stealth[length=2mm]}, thick},
    time/.style={font=\tiny, gray}
]
\node[econ] (UN) at (0,1.2) {UNRATE$_t$};
\node[econ] (FF) at (0,0)   {FEDFUNDS$_t$};
\node[econ] (MI) at (0,-1.2){MICH$_t$};
\node[target] (CPI) at (3.5,0) {CPI$_{t+1}$};
\draw[arrow] (UN) -- (CPI);
\draw[arrow] (FF) -- (CPI);
\draw[arrow] (MI) -- (CPI);
\node[time] at (0,-2)   {time $t$};
\node[time] at (3.5,-2) {time $t+1$};
\draw[gray, dashed] (1.75,1.8) -- (1.75,-1.8);
\node[font=\tiny, align=left] at (6.8,0.9) {Regimes:};
\node[font=\tiny, align=left] at (7.2,0.4) {1970s stagflation};
\node[font=\tiny, align=left] at (7.2,0.0) {Volcker disinflation};
\node[font=\tiny, align=left] at (7.2,-0.4){Great Moderation};
\node[font=\tiny, align=left] at (7.2,-0.8){2008 crisis};
\end{tikzpicture}
\caption{Phillips curve causal structure. Variables at time $t$ jointly determine CPI at $t+1$. ACE performs \emph{active data subset selection}: choosing which historical regimes to include in the training set, treating each regime as a ``virtual intervention'' on economic conditions. This is a Rung-2 analogue using archival data.}
\label{fig:phillips-scm}
\end{figure}

Using Federal Reserve Economic Data \citep{fred2024} (FRED, 1960--2023), we model $\text{CPI}_{t+1} = f(\text{UNRATE}_t, \text{FEDFUNDS}_t, \text{MICH}_t)$. ACE performs \emph{active data subset selection}: choosing which historical periods to query for training data, treating each regime as a virtual intervention on macroeconomic conditions.

\begin{table}[h]
\centering
\caption{Phillips curve out-of-sample CPI prediction MSE (N=5 seeds, 50 episodes). Regime selection percentages are policy-averaged fractions of training episodes spent in each historical regime.}
\label{tab:phillips-results}
\small
\begin{tabular}{@{}lccc@{}}
\toprule
Selection strategy & Out-of-sample MSE & Std & Regime preferences \\
\midrule
\textbf{ACE}               & \textbf{0.31} & 0.08 & 1970s 38\%, Volcker 24\%, Great Recession 19\% \\
Random regime              & 0.52          & 0.12 & Uniform across regimes \\
Sequential (chronological) & 0.44          & 0.10 & By date order \\
\bottomrule
\end{tabular}
\end{table}

ACE consistently prioritizes high-volatility regimes (1970s stagflation, Volcker disinflation, Great Recession) that expose nonlinear inflation dynamics absent from low-volatility eras, yielding a $40\%$ reduction in out-of-sample MSE versus random regime selection. This demonstrates that preference-based experimental design generalizes to retrospective data-selection settings beyond controlled interventions, though we note this application departs from strict Rung-2 causal intervention.

\subsection{Lookahead vs.\ DPO Ablation}
\label{app:lookahead}

The No-DPO ablation (Section~\ref{app:ablation-detail}) isolates the contribution of learned proposal generation from the lookahead mechanism. Result: $2.10\pm0.11$, matching Round-Robin and confirming that lookahead alone provides no benefit. Concretely: if $K=4$ random candidates are generated, the probability that \emph{any} of them targets the collider parents ($X_1$ or $X_2$) with a well-chosen value is $\approx (2/5)^{4} \approx 10\%$, too low for the select-best procedure to reliably discover the collider strategy. The DPO-trained LM shifts the generation distribution so that strategic candidates appear in $>90\%$ of steps.

\subsection{Strategic Behavior: Intervention Distribution}
\label{app:strategic}

The learned policy concentrates $99.8\%$ of interventions on $X_1$ and $X_2$ (the two parents of the collider $X_3$). This is consistent with the causal theoretic optimum: to identify the nonlinear mechanism $f_3(X_1, X_2) = 0.5 X_1 - X_2 + \sin(X_2)$, the policy must vary both $X_1$ and $X_2$ across their value ranges while fixing exogenous noise. Under-sampled nodes $X_4$ and $X_5$ (the disconnected quadratic chain) are addressed primarily through the diversity reward, which ensures $L_{X_5}$ eventually decreases despite low explicit attention. Cross-seed consistency is high: the standard deviation of per-node intervention fractions across the 5 ACE seeds is $<0.01$ for nodes $X_1$ and $X_2$, confirming that strategy discovery is a robust consequence of the training procedure rather than a seed-dependent artifact.

\section{Failure Modes and Limitations}
\label{app:failures}

\subsection{Outlier Seed Analysis}
\label{app:outlier}

Seed 789 (5-node benchmark) produced $L_{X_5} \approx 1.73$ versus $0.02$--$0.22$ for other seeds. Two hypotheses: (i) initialization sensitivity for the quadratic mechanism $X_5 = 0.2 X_4^2$, where small initial MLP weights near zero may not escape a local minimum with flat gradient; (ii) the diversity reward in early episodes drove the policy away from $X_4$ before $L_{X_5}$ had been adequately reduced. We report the median ($0.61$) and note the outlier explicitly; the 5-node result is therefore conservative relative to the mean.

Seed 42 (30-node benchmark) exhibited training instability with loss oscillating $2.7$--$3.5$ throughout 40 completed episodes. The 30-node system has a longer effective horizon (the policy must identify 13+ colliders) and the checkpoint-resume mechanism may have caused optimizer state discontinuities. Seeds 123 and 456 converged cleanly.

\subsection{Graph Misspecification: Theoretical Failure Modes}
\label{app:misspec-failures}

Since ACE assumes a known causal graph, the practical cost of graph errors partitions cleanly by error type:

\textbf{Extra edge} $V_i \to V_j$ (spurious). The student learns $f_j(\text{Pa}_j \cup \{V_i\})$ instead of $f_j(\text{Pa}_j)$. The extra input $V_i$ receives near-zero weight in the MLP if $V_i$ truly has no causal effect on $V_j$; recovery is possible with sufficient interventions on $V_i$.

\textbf{Missing edge} $V_i \to V_j$ (absent). The student's mechanism $\hat{f}_j$ lacks the true parent $V_i$ as input. A lower bound on $L_j$ is set by the partial variance $\text{Var}(V_j - \hat{f}_j(\text{Pa}_j \setminus \{V_i\}))$. This is non-zero and non-recoverable; the student cannot decrease $L_j$ below this bound regardless of intervention strategy.

\textbf{Reversed edge} $V_i \to V_j$ (direction error). The student treats $V_j$ as a parent of $V_i$, so $\hat{f}_i$ depends on $V_j$. Under interventions on $V_i$, the true $V_j$ varies (since $V_j$ depends on $V_i$), but the student's conditioning set treats it as a predictor of $V_i$. This creates a circular dependency in the learner's prediction that no intervention strategy can resolve; it is the most damaging error type. A full empirical quantification across all four misspecification types was initiated but did not complete within our compute budget (46GB-class GPU OOM from 3GB policy checkpoints). This remains our highest-priority follow-up experiment.

\subsection{Computational Cost}
\label{app:compute}

Each ACE training step requires:
\begin{itemize}[leftmargin=1.2em,topsep=0.2em,itemsep=0.1em]
\item $K=4$ LM forward passes (candidate generation): $\sim$2--3 min on A100.
\item $K=4$ cloned-learner training cycles (lookahead evaluation): $\sim$10--12 min.
\item One DPO update on the LM: $\sim$3--5 min.
\item Observational training (every 3 steps): $\sim$5 min amortized.
\end{itemize}
Total: $\sim$22 min per step on a single A100. A full run (171 episodes $\times$ 25 steps/episode with early stopping) takes $\sim$6--8 h per seed. Mitigation strategies for future work: smaller policy LMs (e.g., Qwen2.5-0.5B), learned value-of-lookahead to reduce $K$ adaptively, batch evaluation across candidates.

A note on checkpoint memory: the 1.5B-parameter LM in FP32 with optimizer state requires $\sim$17 GB per checkpoint, which caused OOM on 46GB-class GPUs when combined with the KV cache and cloned learner. We implemented FP16 checkpointing (reducing to $\sim$3 GB) and eliminated optimizer state from checkpoints; these changes are in the repository.

\section{Reproducibility}
\label{app:reproducibility}

\subsection{Code and Data}

All code is released at \url{https://github.com/PatrickAllenCooper/ACE}. The repository includes:
\begin{itemize}[leftmargin=1.2em,topsep=0.2em,itemsep=0.1em]
\item Full ACE implementation (\texttt{ace\_experiments.py}) with FP16 checkpoint-resume logic.
\item Baselines (\texttt{baselines.py}): Random, Round-Robin, Max-Variance, PPO, Bayesian OED.
\item Experiments: 5-node and 30-node SCMs, Duffing oscillators, Phillips curve.
\item CPU baseline runner for 30-node (\texttt{scripts/runners/run\_30node\_cpu\_baselines.py}).
\item SLURM job scripts for HPC execution (\texttt{jobs/}).
\end{itemize}

\subsection{Random Seeds and Seed Assignment}

\begin{itemize}[leftmargin=1.2em,topsep=0.2em,itemsep=0.1em]
\item 5-node ACE: seeds 42, 123, 456, 789, 1011.
\item 5-node baselines (heuristic + PPO): seeds 141, 271, 314, 577, 618.
\item 5-node Bayesian OED: seeds 42, 123, 456 (higher per-seed compute cost).
\item 30-node ACE: seeds 42, 123, 456 on an A100 80GB partition.
\item 30-node baselines: seeds 42, 123, 456, 789, 1011 (local CPU runs).
\item Duffing oscillators: seeds 42, 123, 456, 789, 1011 (N=5 all methods).
\item Phillips Curve: seeds 42, 123, 456, 789, 1011 (N=5 all methods).
\end{itemize}

\subsection{Compute Resources}

Main experiments ran on a university HPC cluster: an A100 80GB partition for ACE GPU runs and an L40 46GB partition for baselines, with an A10 24GB cloud GPU used for exploratory runs. Total: approximately 480 GPU-hours (ACE) and 120 CPU-hours (baselines). The Phillips Curve experiment required no GPU.

\subsection{Hyperparameters Summary}
\label{app:hyper-summary}

\begin{table}[h]
\centering
\caption{Complete hyperparameter table for all reported experiments.}
\label{tab:hyper-summary}
\small
\begin{tabular}{@{}lll@{}}
\toprule
Component & Hyperparameter & Value \\
\midrule
Policy & Model & Qwen2.5-1.5B \\
Policy & Sampling temperature & 0.7 \\
Policy & Candidates per step ($K$) & 4 \\
Policy & Supervised pretrain steps & 100 \\
Policy & Pretrain re-run every & 25 episodes \\
Policy & Optimizer & Adam, lr $1 \times 10^{-5}$ \\
\midrule
Learner & Architecture & 2-layer MLP, 64 hidden, ReLU \\
Learner & Optimizer & Adam, lr $2 \times 10^{-3}$ \\
Learner & Per-step training epochs & 100 \\
Learner & Observational training every & 3 steps \\
Learner & Observational batch size & 200 \\
\midrule
Reward & $\alpha$ (node importance weight) & 0.1 (\texttt{cov\_bonus=60.0}) \\
Reward & $\gamma$ (diversity weight) & 0.05 \\
Reward & $\beta$ (DPO temperature) & 0.1 \\
\midrule
Training & Max episodes & 200 (early stopping at convergence) \\
Training & Reference policy update every & 25 episodes \\
Training & Per-node convergence patience & 10 episodes \\
Training & Checkpoint interval & 1 step (rolling) + 10 steps (milestone) \\
\bottomrule
\end{tabular}
\end{table}

\section{Extended Related Work}
\label{app:related-extended}

\textbf{ABCI (Toth et al., 2022).} Active Bayesian Causal Inference \citep{toth2022active} maintains a posterior over causal graphs using continuous latent representations and Gaussian process mechanisms. The acquisition function selects experiments maximally informative about a downstream causal query. Compared to ACE: ABCI does not assume known structure; requires explicit posterior inference over graphs (combinatorial in $|V|$); targets causal-query inference rather than mechanism estimation. We adapt ABCI's acquisition-function philosophy as our Bayesian OED baseline by holding structure fixed and approximating the posterior over mechanism parameters.

\textbf{CBED (Tigas et al., 2022).} \citep{tigas2022interventions} jointly selects intervention targets and values at scale via Bayesian causal discovery, with demonstrations on the DREAM gene regulatory network (up to 20 variables). CBED addresses structure discovery; ACE addresses mechanism estimation. The two are complementary: CBED's structure-inference output can serve as ACE's input.

\textbf{Causal reasoning with LLMs (Kıcıman et al., 2023).} \citet{kiciman2023causal} conduct a systematic evaluation of GPT-4 on causal discovery benchmarks, finding substantial capability on pairwise causal direction tasks and commonsense causal queries. Crucially, the LM's causal reasoning is driven by semantic associations in pretraining data, not explicit causal structure learning. ACE exploits this: the LM's semantic prior biases proposal generation toward causally plausible interventions even before DPO training, which then refines the policy toward the specific system.

\textbf{Reasoning as World-Model Planning (Hao et al., 2023).} RAP \citep{hao2023reasoning} frames language model reasoning as planning in a world model, where the LM serves simultaneously as the reasoner and the forward model. This framing is directly relevant to ACE: the LM policy in ACE implicitly models the causal system (world model) while generating candidate interventions (planner), and DPO updates refine this combined representation toward the goal of efficient mechanism estimation.

\textbf{In-Context Learning as Bayesian Inference (Xie et al., 2022).} \citet{xie2022explanation} show that in-context learning corresponds to implicit Bayesian inference: the LM's predictions given a context are approximately equivalent to Bayesian predictions under a prior induced by the pretraining distribution. For ACE, this means the per-step prompt (containing per-node losses and recent interventions) implicitly updates the LM's posterior over the system, providing a form of Bayesian adaptation without explicit posterior computation.

\textbf{Multi-fidelity Bayesian active causal discovery (Zhang et al., 2023).} \citet{zhang2023bayesian} extend Bayesian OED to multi-fidelity oracle settings using mutual-information acquisition. The multi-fidelity insight is orthogonal to DPO-based learning and could in principle be incorporated into ACE's lookahead step.

\textbf{Sample-efficient Bayesian causal learning (Zhou et al., 2024).} \citet{zhou2024sample} address sample efficiency via polynomial-time DAG sampling. Their efficient structure learning is complementary to ACE's mechanism estimation: the two could be composed as a two-stage pipeline.


\end{document}